 \journalname{Machine Learning}
\begin{document}

\title{RandSVM: A Randomized Algorithm for training Support Vector Machines on Large Datasets
}

\titlerunning{Randomized Algorithms for Large scale SVMs}        

\author{Vinay Jethava  \and
        Krishnan Suresh \and
	Chiranjib Bhattacharyya\and 
	Ramesh Hariharan
}


\institute{Vinay Jethava \at
              Indian Institute of Science, Bangalore \\
              \email{vjethava@csa.iisc.ernet.in}           
           \and
           Krishnan Suresh \at
           Yahoo Labs, India \\
	   \email{krishnan.suresh@gmail.com} 
	\and
	Chiranjib Bhattacharyya \at
	Indian Institute of Science, Bangalore \\
	Tel.: +91-080-22532468 EXT: 240\\
        Fax: +91-080-23602911\\
	\email{chiru@csa.iisc.ernet.in}
	\and 
	Ramesh Hariharan \at
	Strand LifeSciences, Bangalore \\
	\email{ramesh@strandls.com}
}
\date{Received: date / Accepted: date}

\maketitle

\begin{abstract}
We propose a randomized algorithm for  
training Support vector machines(SVMs) on large datasets. 
By using ideas from Random projections we show that 
the combinatorial dimension of SVMs is $O(\mbox{log}~ n)$ with high probability.
This estimate of combinatorial dimension is used 
to derive an iterative algorithm, called RandSVM, 
which at each step calls an existing solver to train SVMs on 
a randomly chosen subset of size $O(\mbox{log}~ n)$.  
The algorithm has probabilistic guarantees and is 
capable of training SVMs 
with Kernels for both classification and regression problems.  Experiments done on synthetic and real life data sets demonstrate that the algorithm scales up 
existing SVM learners, without loss of accuracy.  

\keywords{Support Vector Machines,  Randomized Algorithms, Random Projections}
\subclass{68W20 \and 90C25 \and 90C06 \and 90C90 }
\end{abstract}

\section{Introduction}
\label{sec:intro}
Consider a training data set $D = \{(x_i , y_i ) , \; i = 1 \dots n\}$ where $x_i \in
R^d$ are data points and $y_i$ are labels. 
The problem of learning a linear classifier,
$y = sign (w^\top x + b)$,  where $y = \{1,-1\}$ or a linear function $y = w^\top x + b$ when $y$ is a scalar 
can be understood as estimating $\{w, b\}$ from $D$. 
Over the years Support Vector Machines(SVMs) have emerged as powerful tools
for estimating such functions. In this paper we concentrate on developing 
randomized algorithms for learning SVMs on large datasets. 
 For a 
detailed review of SVM classification and SVM regression please see \cite{Vapnik95}. 

To develop notation we briefly discuss the problem of training linear classifiers.
The SVM formulation for linearly separable datasets is given by \cite{Vapnik95} 
\begin{eqnarray}\label{eq:SVM}
& \min_{w,b} \frac{1}{2}||w||^2  \nonumber \\
& s.t. \, y_i (w^\top x_i + b) \ge 1 , i=1\dots n & \nonumber
\end{eqnarray} 
where $||w|| = \sqrt{w^\top w}$, is the euclidean norm of $w$.
The formulation has very interesting geometric underpinnings 
~\cite{Bennett00duality}. It can be understood as computing the distance between convex 
hulls of the sets $\{x_i |  y_i = 1\}$ and $\{x_j | y_j = -1 \}$. 
For linearly non-separable datasets the following formulation \\   
\textbf{C-SVM-1:} \\
\begin{eqnarray}\label{eq:C-SVM-1}
& \min_{(w,b,\xi)} \frac{1}{2}||w||^2 + C\sum_{i=1}^{n} \xi_i & \nonumber \\
& s.t. \, y_i (w^\top x_i + b) \ge 1 - \xi_i ,\quad \xi_i \ge 0, i=1\dots n & \nonumber
\end{eqnarray}
which will be called $C-SVM$, again due to \cite{Vapnik95}, can be used. This 
formulation do not have an elegant geometric interpretation like the separable 
case,  but one can consider C-SVMs as computing the distance between two 
reduced convex hulls ~\cite{Bennett00duality}.
  
Both the formulations are instances of \emph{Abstract Optimization Problem(AOP)}~\cite{Balcazar01a,Balcazar01b,Gartner92}. An AOP is defined as follows:
\begin{definition}[AOP]
\emph{ An AOP is a triple $(H, <, \Phi)$ where $H$ is a finite set, $<$ a total ordering on $2^H$, and $\Phi$ an oracle that, for a given $F \subseteq G \subseteq H$, either reports $F = min_< \{F' |F' \subseteq G\}$ or returns a set $F' \subseteq G$ with $F' < F$.}
\end{definition}
Every AOP has a \emph{combinatorial dimension associated} with it; the combinatorial dimension
captures the notion of number of free variables for that AOP. An AOP can be solved by a
randomized algorithm by selecting subsets of size greater than the combinatorial dimension
of the problem~\cite{Gartner92}. We wish to exploit this property 
of AOPs to design randomized algorithms for SVMs.  

The idea is to develop an iterative algorithm where in each step one needs 
to solve a SVM formulation on a small subset of the training data. Crucial 
to this idea is the size of the subset which is tied to the combinatorial 
dimension of the SVM formulation. To this end note that
at optimality $w$ is given by
\begin{equation}\label{defn:w}
w =\sum_{i:\alpha_i>0}\alpha_i y_i x_i,
\end{equation}
for both the separable and non-separable case.
Using the $\alpha$ variables one can define the set of Support vectors~(SVs),
\begin{equation}\label{defn:S}
S = \{x_i |\alpha_i > 0\}
\end{equation}
which defines $w$. The set $S$ may not be unique,
though $w$ is. 
The combinatorial dimension of SVMs is given by the minimum number of SVs  
required to define $w$. More formally
\begin{equation}\label{def:delta}
 \Delta = \min_{S} |S| 
\end{equation}
where $|S|$ is the cardinality of the set $S$.  

The parameter $\Delta$
does not change with number of examples $n$, and is often much less than $n$.
Apriori the value of $\Delta$ is not known, but for linearly separable classification problems the following holds: $2\le \Delta \le d+1$. This follows from the observation that 
it computes the distance between 2 non-overlapping convex hulls~\cite{Bennett00duality}. When the problem is not linearly separable, the reduced convex hull interpretation leads to a very crude upper bound, which is much larger than $d$.

The idea of iterating over randomly sampled subsets of size greater than 
$\Delta$, for training SVMs was first explored by 
~\cite{Balcazar01a,Balcazar01b}, and 
the resulting algorithm was called RandSVM. 
The  RandSVM procedure iterates over subsets of size proportional to $\Delta^2$ , as shown
in Algorithm~\ref{algo:RandSVM}.
However as the authors noted that RandSVM is not practical because of the following reasons. For linear classifiers the sample size is too large
in case of high dimensional data sets. 
For non-linear SVMs ~\cite{Vapnik95}  the dimension of feature space is usually unknown
when using kernels. Even in this case one can obtain a very crude 
upper-bound on $\Delta$ by the reduced convex hull approach but 
is not really useful as the
number obtained is very large.

\begin{algorithm}
\label{algo:RandSVM}
\caption{$RandSVM(D,\Delta)$}

\begin{algorithmic}[1]
\REQUIRE $D$ - Dataset
\REQUIRE $\Delta$ - Combinatorial Dimension

\STATE Sample size $r = 6\Delta^2$
\STATE Set weights $w(x_i)$ to be 1 for all examples in $D$. For any set $A \subseteq D$, let $w(A) =\sum_{x_i\in A} w(x_i)$.
\REPEAT
\STATE Select a sample $S$ of size $r$ randomly according to $w$.
\STATE Use a SVM solver to solve the smaller problem. Let the classifier obtained be $C$.
\STATE Classify the non sampled documents $D − S$.
\STATE Let $V$ be the set of misclassified documents and let $v$ be the size of $V$.
\IF{$(w(V ) \le w(D)/(3\Delta))$}
\STATE Double the weights of misclassified documents.
\ENDIF
\UNTIL $v=0$
\STATE Done  
\end{algorithmic}
\end{algorithm}

This work overcomes the above problems using ideas from random projections~\cite{JohnsonLindenstrauss84,dasgupta99elementary,Arriaga06rp} and randomized algorithms~\cite{Clarkson95,Gartner92,Gartner00}. As mentioned by the authors of RandSVM, the biggest bottleneck in their algorithm is the value of $\Delta$ as it is too large. The main contribution of this work is, using ideas from random projections, the conjecture that if RandSVM is solved using $\Delta$ equal to $O(\log n)$, then the
solution obtained is close to optimal with high probability(Theorem~\ref{thm:thm3}, particularly for linearly separable and almost separable data sets. Almost separable data sets are those which
become linearly separable when a small number of properly chosen data points are deleted from them. The second contribution is an algorithm which, using ideas from randomized
algorithms for Linear Programming(LP), solves the SVM problem by using samples of size
linear in $\Delta$. This work also shows that the theory can be applied to non-linear kernels. The formulation naturally applies to regression problems. 

The paper is organized as follows: Section~\ref{sec:past_work} introduces the previous work, Section~\ref{sec:classification} presents the improved algorithm for classification for almost linearly  separable data. 
 Section~\ref{sec:regression} presents the improved algorithm for the $\epsilon-$tube regression formulation. We present our results  and conclusions in Section~\ref{sec:results} and ~\ref{sec:conclusions}.

\section{Past Work}\label{sec:past_work}
We begin by reviewing some results from random projections~\cite{Arriaga06rp}. The data points in $R^d$ are projected into a random $k$ dimensional subspace where $k \ll d$. Then, we look at a few algorithms which focus on large scale classification.

\subsection{Random Projection}\label{subsec:random_projection}
The following lemma discusses how the $L_2$ norm of a vector is preserved when it is projected on a random subspace.
\begin{lemma}\label{thm:lemma_1}
 Let $R = (r_{i j} )$ be a random $d \times k$ matrix, such that each entry $(r_{i j})$ is chosen
independently according to $N(0, 1)$. For any fixed vector $u \in R^d$, and any $\epsilon> 0$, let
$u'=\frac{R^T u}{\sqrt{k}}$. Then $E[||u'||^2 ] = ||u||^2$ and the following bounds hold:
\begin{displaymath}
 (1-\epsilon) ||u||^2 \le ||u'||^2 \le (1+\epsilon) ||u||^2
\end{displaymath}
with probability at least $1-2e^{(\epsilon^2-\epsilon^3)\frac{k}{4}}$.
\end{lemma}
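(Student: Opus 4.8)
The plan is to reduce the two-sided bound to a standard concentration estimate for a chi-squared variable, in the spirit of the Dasgupta--Gupta proof of the Johnson--Lindenstrauss lemma. First I would write $u'$ coordinatewise: its $j$-th entry is $u'_j = \frac{1}{\sqrt{k}}\sum_{i=1}^{d} r_{ij}u_i$, a fixed linear combination of the $N(0,1)$ variables in the $j$-th column of $R$. Hence $u'_j$ is Gaussian with mean $0$ and variance $\frac{1}{k}\sum_i u_i^2 = \frac{||u||^2}{k}$, and since distinct columns of $R$ use disjoint (hence independent) entries, the coordinates $u'_1,\dots,u'_k$ are independent. Setting $Z_j = \sqrt{k}\,u'_j/||u||$ makes the $Z_j$ i.i.d.\ standard normal, so that $Q := \sum_{j=1}^{k} Z_j^2 \sim \chi^2_k$ and $||u'||^2 = \frac{||u||^2}{k}\,Q$. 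The expectation claim follows at once, since $E[Q] = k$ yields $E[||u'||^2] = ||u||^2$.

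It then remains to bound $P\!\left(k(1-\epsilon) \le Q \le k(1+\epsilon)\right)$, as this event is exactly the desired sandwich for $||u'||^2$. I would treat each tail by a Chernoff bound built on the moment generating function $E[e^{tQ}] = (1-2t)^{-k/2}$, valid for $t<1/2$. For the upper tail, Markov's inequality applied to $e^{tQ}$ gives $P(Q \ge k(1+\epsilon)) \le (1-2t)^{-k/2}e^{-tk(1+\epsilon)}$, and the minimizing choice $t = \frac{\epsilon}{2(1+\epsilon)}$ collapses this to $\bigl((1+\epsilon)e^{-\epsilon}\bigr)^{k/2}$. The lower tail is symmetric: applying the same argument to $e^{-tQ}$ with $t = \frac{\epsilon}{2(1-\epsilon)}$ yields $\bigl((1-\epsilon)e^{\epsilon}\bigr)^{k/2}$.

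The final and most delicate step is to show that each of these quantities is at most $e^{-(\epsilon^2-\epsilon^3)k/4}$. For this I would invoke the elementary logarithmic estimates $\ln(1+\epsilon) \le \epsilon - \frac{\epsilon^2}{2} + \frac{\epsilon^3}{3}$ and $\ln(1-\epsilon) \le -\epsilon - \frac{\epsilon^2}{2}$. The first gives $\epsilon - \ln(1+\epsilon) \ge \frac{\epsilon^2}{2} - \frac{\epsilon^3}{3} \ge \frac{\epsilon^2-\epsilon^3}{2}$, so the upper-tail bound $\exp\!\bigl(-\frac{k}{2}(\epsilon-\ln(1+\epsilon))\bigr)$ is at most $e^{-(\epsilon^2-\epsilon^3)k/4}$; the second gives $-\epsilon - \ln(1-\epsilon) \ge \frac{\epsilon^2}{2} \ge \frac{\epsilon^2-\epsilon^3}{2}$, so the lower tail is even smaller. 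A union bound over the two tails caps the failure probability at $2e^{-(\epsilon^2-\epsilon^3)k/4}$, whose complement is the claimed success probability. I expect the distributional reduction and the expectation to be routine; the only place demanding care is matching constants in these logarithm inequalities, since it is precisely the gap between $\epsilon^3/3$ and $\epsilon^3/2$ that lets the tighter upper tail fit the stated exponent.
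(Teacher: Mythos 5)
Your proof is correct, but there is nothing in the paper to compare it against: the paper does not prove this lemma at all, it quotes it as a background result from Arriaga and Vempala (2006). The chi-squared Chernoff argument you give is precisely the standard Dasgupta--Gupta-style proof of this norm-preservation fact from the literature the paper cites, so in effect you have supplied the proof the paper delegates to a reference. Your reduction is sound: the coordinates $u'_j$ are independent $N(0,\|u\|^2/k)$ variables (independence coming from disjoint columns of $R$), hence $\|u'\|^2 = \frac{\|u\|^2}{k}Q$ with $Q\sim\chi^2_k$; the expectation identity follows from $E[Q]=k$; the optimized Chernoff bounds $\bigl((1+\epsilon)e^{-\epsilon}\bigr)^{k/2}$ and $\bigl((1-\epsilon)e^{\epsilon}\bigr)^{k/2}$ with $t=\frac{\epsilon}{2(1+\epsilon)}$ and $t=\frac{\epsilon}{2(1-\epsilon)}$ are exactly right; and your logarithm estimates ($\epsilon-\ln(1+\epsilon)\ge\frac{\epsilon^2}{2}-\frac{\epsilon^3}{3}\ge\frac{\epsilon^2-\epsilon^3}{2}$ and $-\epsilon-\ln(1-\epsilon)\ge\frac{\epsilon^2}{2}$) close both tails before the union bound. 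Two small points worth recording. First, the statement as printed in the paper contains a sign typo: the probability should read $1-2e^{-(\epsilon^2-\epsilon^3)\frac{k}{4}}$ (as written, with a positive exponent, the claimed bound is negative and vacuous); what you prove is the intended, corrected statement. Second, for completeness you should dispose of the trivial cases $u=0$ (everything is identically zero) and $\epsilon\ge 1$ (the lower-tail event $Q\le k(1-\epsilon)$ is empty, so only the upper tail needs the Chernoff argument); with those set aside, your choice $t=\frac{\epsilon}{2(1-\epsilon)}$ is legitimate and the argument is complete.
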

The following theorem and its corollary show the change in the Euclidean distance
between 2 points and the dot products when they are projected onto a lower dimensional
space~\cite{Arriaga06rp}.

\begin{lemma}\label{thm:lemma_2}
 Let $u, v \in R^d$. Let $u' = \frac{R^T u}{\sqrt k}$ and $v' = \frac{R^T v}{\sqrt k}$ be the projections of $u$ and $v$ to $R^k$ via a random matrix $R$ whose entries are chosen independently from $N (0, 1)$ or $U (−1, 1)$. Then for any $\epsilon> 0$, the following bounds hold
\begin{displaymath}
 (1-\epsilon) ||u-v||^2 \le ||u'-v'||^2
\end{displaymath}
with probability at least $1-e^{-(\epsilon^2-\epsilon^3)\frac{k}{4}}$ and
\begin{displaymath}
 ||u'-v'||^2 \le (1+\epsilon) ||u-v||^2
\end{displaymath}
with probability at least $1-e^{-(\epsilon^2-\epsilon^3)\frac{k}{4}}$.
\end{lemma}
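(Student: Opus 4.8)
The plan is to reduce both inequalities to Lemma~\ref{thm:lemma_1} by exploiting the linearity of the projection. First I would set $w = u - v$. Since $u' = \frac{R^T u}{\sqrt{k}}$, $v' = \frac{R^T v}{\sqrt{k}}$, and the map $x \mapsto \frac{R^T x}{\sqrt{k}}$ is linear, we have $u' - v' = \frac{R^T(u-v)}{\sqrt{k}} = w'$, where $w'$ is precisely the projection of the single vector $w$ in the sense of Lemma~\ref{thm:lemma_1}. Hence $||u'-v'||^2 = ||w'||^2$ and $||u-v||^2 = ||w||^2$, so the statement is nothing but Lemma~\ref{thm:lemma_1} applied to $w$. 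This is the whole conceptual content of the lemma: a distance is the norm of a difference, and the random projection commutes with taking differences.

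For the Gaussian case the conclusion is then immediate. Lemma~\ref{thm:lemma_1} supplies $E[||w'||^2] = ||w||^2$ together with the two-sided concentration bound. The reason each inequality here carries failure probability $e^{-(\epsilon^2-\epsilon^3)k/4}$, rather than twice that, is that the two inequalities are exactly the lower and the upper tail taken \emph{separately}. In the proof of Lemma~\ref{thm:lemma_1} each tail is controlled by its own Chernoff / moment-generating-function estimate, each yielding $e^{-(\epsilon^2-\epsilon^3)k/4}$; the factor of two in the two-sided statement appears only when the two tails are combined by a union bound. I would therefore simply read off the two individual one-sided tail bounds from that same estimate, which gives precisely the two displayed inequalities with the claimed probabilities.

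The only genuinely new work is the case where the entries $r_{ij}$ are drawn from $U(-1,1)$ rather than $N(0,1)$, since Lemma~\ref{thm:lemma_1} is stated only for Gaussian entries. Here I would rerun the Chernoff argument directly: writing $Z_j = \sum_i r_{ij}\, w_i / ||w||$, we have $||w'||^2 / ||w||^2 = \frac{1}{k}\sum_{j=1}^{k} Z_j^2$, and the tails of this sum are controlled by bounding the moment generating function $E[\exp(t Z_j^2)]$ for the uniform law and optimizing over $t$, exactly as in~\cite{Arriaga06rp}. I expect this to be the main obstacle, because the uniform distribution has second moment $E[r_{ij}^2] = 1/3$ rather than $1$, so either the projection must be rescaled accordingly or one must verify by a direct sub-Gaussian moment computation that the uniform MGF is dominated by the Gaussian one; the delicate point is confirming that this still produces the \emph{identical} exponent $(\epsilon^2-\epsilon^3)k/4$. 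Once that moment bound is established, the two one-sided inequalities follow for the uniform case by the same tail-splitting used above.
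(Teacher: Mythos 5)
The first thing to note is that the paper contains no proof of this lemma at all: Lemma~\ref{thm:lemma_2} is quoted verbatim from \cite{Arriaga06rp} (only Corollary~\ref{thm:corollary_1} is proved in the paper), so your proposal can only be measured against the argument in that reference, which your reduction essentially reproduces. That reduction is correct and is the standard one: by linearity of $x \mapsto \frac{R^T x}{\sqrt{k}}$ we have $u'-v' = \frac{R^T(u-v)}{\sqrt{k}}$, so both displayed inequalities are exactly the lower and upper tails of Lemma~\ref{thm:lemma_1} applied to the single vector $w = u-v$. Your accounting of the probabilities is also right: in the underlying Chernoff argument each tail separately fails with probability at most $e^{-(\epsilon^2-\epsilon^3)k/4}$, and the factor $2$ in the two-sided statement of Lemma~\ref{thm:lemma_1} is only the union bound over the two tails, so quoting the one-sided bounds individually is legitimate.

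The one genuine gap is in how you handle the $U(-1,1)$ case, which you correctly identify as the part not covered by Lemma~\ref{thm:lemma_1}. Of the two escape routes you offer, only the first (rescaling) can work. If $U(-1,1)$ is read as the uniform distribution on the interval $[-1,1]$, then $E[r_{ij}^2]=\frac{1}{3}$, hence $E\left[\norm{u'-v'}^2\right]=\frac{1}{3}\norm{u-v}^2$; the lower-tail inequality $(1-\epsilon)\norm{u-v}^2 \le \norm{u'-v'}^2$ is then \emph{false} with overwhelming probability for small $\epsilon$, and no moment-generating-function domination argument can repair a mismatch in the mean itself, so the second alternative you propose is a dead end. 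Under that reading the statement only becomes true after replacing $\frac{1}{\sqrt{k}}$ by $\sqrt{\frac{3}{k}}$, after which a Chernoff computation for the rescaled (bounded, unit-variance) entries goes through. The reading under which the lemma is correct as written is that $U(-1,1)$ denotes the uniform distribution on the two-point set $\{-1,+1\}$, i.e.\ random signs, which have unit variance; there the Achlioptas-type bound gives failure probability $e^{-(\epsilon^2/2-\epsilon^3/3)\frac{k}{2}} \le e^{-(\epsilon^2-\epsilon^3)\frac{k}{4}}$ for each tail, confirming the claimed exponent. So your proof is complete for the Gaussian case, but for the other case you must commit to the unit-variance interpretation (or the rescaled projection) rather than leave the unrescaled interval-uniform option open.
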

A corollary of the above theorem shows how well the dot products are preserved upon projection(This is a slight modification of the corollary given in ~\cite{Arriaga06rp}).
\begin{corollary}\label{thm:corollary_1}
Let $u, v$ be vectors in $R^d$ s.t. $||u|| \le L_1 , ||v|| \le L_2$ . Let $R$ be a random matrix whose entries are chosen independently from either $N (0, 1)$ or $U (−1, 1)$. Define $u' = \frac{R^T u}{\sqrt k}$ and $v' = \frac{R^T v}{\sqrt k}$. Then for any $\epsilon> 0$, the following bound holds
\begin{displaymath}
 u\cdot v - \frac{\epsilon}{2} (L_1^2 + L_2^2) \le u' \cdot v' \le u\cdot v + \frac{\epsilon}{2} (L_1^2 + L_2^2)
\end{displaymath}
with probability at least $1-4e^{-\epsilon^2 \frac{k}{8}}$.
\end{corollary}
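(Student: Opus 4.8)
The plan is to reduce the statement about projected dot products to the norm-preservation guarantee of Lemma~\ref{thm:lemma_1} via the polarization identity. The key observation is that the projection $u \mapsto u' = R^\top u/\sqrt{k}$ is \emph{linear}, so $(u+v)' = u' + v'$ and $(u-v)' = u' - v'$. Combining this with the polarization identity $u' \cdot v' = \frac{1}{4}\left(\|u'+v'\|^2 - \|u'-v'\|^2\right)$ lets me express the projected dot product purely in terms of the projected squared norms of the two fixed vectors $u+v$ and $u-v$, each of which is directly controlled by Lemma~\ref{thm:lemma_1}.

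Concretely, I would first apply Lemma~\ref{thm:lemma_1} to the vector $u+v$ and to the vector $u-v$, obtaining $(1-\epsilon)\|u\pm v\|^2 \le \|(u\pm v)'\|^2 \le (1+\epsilon)\|u\pm v\|^2$. Substituting the upper bound for $\|(u+v)'\|^2$ and the lower bound for $\|(u-v)'\|^2$ into $u'\cdot v' = \frac{1}{4}\left(\|(u+v)'\|^2 - \|(u-v)'\|^2\right)$ yields $u'\cdot v' \le u\cdot v + \frac{\epsilon}{4}\left(\|u+v\|^2 + \|u-v\|^2\right)$, and the symmetric choice of bounds gives the matching lower estimate. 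The parallelogram law $\|u+v\|^2 + \|u-v\|^2 = 2\|u\|^2 + 2\|v\|^2$ then collapses the error term to $\frac{\epsilon}{2}\left(\|u\|^2 + \|v\|^2\right)$, which is at most $\frac{\epsilon}{2}(L_1^2 + L_2^2)$ by the hypotheses $\|u\|\le L_1$ and $\|v\|\le L_2$. This is exactly the two-sided bound claimed.

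For the probability, each two-sided application of Lemma~\ref{thm:lemma_1} fails with probability at most $2e^{-(\epsilon^2-\epsilon^3)k/4}$, so a union bound over the two vectors $u+v$ and $u-v$ gives a total failure probability of at most $4e^{-(\epsilon^2-\epsilon^3)k/4}$, which accounts for the factor $4$ in the statement. The one remaining gap — and the only genuinely fiddly point — is reconciling this exponent with the cleaner $4e^{-\epsilon^2 k/8}$ written in the corollary: since $(\epsilon^2-\epsilon^3)/4 \ge \epsilon^2/8$ holds precisely when $\epsilon \le \frac{1}{2}$, the stated bound follows in that regime, and this mild restriction is presumably what the authors mean by calling the result a slight modification of the corollary in~\cite{Arriaga06rp}. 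I expect the bookkeeping of the probability exponent, rather than the algebra of the polarization step, to be the main thing to get right.
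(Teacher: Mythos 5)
Your proof is correct and takes essentially the same route as the paper's: the polarization identity $u'\cdot v' = \frac{1}{4}\left(\|u'+v'\|^2 - \|u'-v'\|^2\right)$, two-sided norm preservation for the fixed vectors $u+v$ and $u-v$ (the paper invokes Lemma~\ref{thm:lemma_2} where you invoke Lemma~\ref{thm:lemma_1} plus linearity of the projection, but these are the same fact), a union bound giving failure probability $4e^{-(\epsilon^2-\epsilon^3)k/4}$, and the parallelogram law to collapse the error term to $\frac{\epsilon}{2}(\|u\|^2+\|v\|^2)$. In fact your final paragraph is more careful than the paper itself, whose proof stops at the exponent $(\epsilon^2-\epsilon^3)k/4$ and never reconciles it with the stated $\epsilon^2 k/8$; your observation that this step requires $\epsilon \le \frac{1}{2}$ closes a gap the paper leaves implicit.
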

\begin{proof} For the vectors $u$ and $v$, let the event $E_1$ be
\begin{displaymath}
(1-\epsilon) ||u-v||^2 \le ||u'-v'||^2 \le (1+\epsilon) ||u-v||^2
\end{displaymath}
and $E_2$ be
\begin{displaymath}
(1-\epsilon) ||u+v||^2 \le ||u'+v'||^2 \le (1+\epsilon) ||u+v||^2
\end{displaymath}
Hence, from lemma~\ref{thm:lemma_2}:
\begin{displaymath}
 P(E_1 \textrm{ and } E_2) \ge 1 - 4e^{-(\epsilon^2-\epsilon^3)\frac{k}{4}}
\end{displaymath}
Now,
\begin{eqnarray*}
 u'\cdot v' & = & \frac{1}{4}(||u'+v'||^2 - ||u'-v'||^2) \\
 & \le & \frac{1}{4} \big((1+\epsilon)||u'+v'||^2 -(1-\epsilon) ||u'-v'||^2\big) \\
 & = & u\cdot v + \frac{\epsilon}{2} (||u||^2 +||v||^2) \\
 \Rightarrow u' \cdot v' & \le &  u\cdot v + \frac{\epsilon}{2} (L_1^2 +L_2^2) \\
\end{eqnarray*}
                                                                              
The above inequality holds with probability greater than or equal to $1 - 2e^{-(\epsilon^2-\epsilon^3)\frac{k}{4}}$. Similarly,
\begin{displaymath}
 u' \cdot v'  \ge   u\cdot v - \frac{\epsilon}{2} (L_1^2 +L_2^2)
\end{displaymath}
holds with probability greater than or equal to $1 - 2e^{-(\epsilon^2-\epsilon^3)\frac{k}{4}}$. \qed
\end{proof}

\subsection{Large scale classification}\label{subsec:past_SVM}
We look at a few algorithms which focus on large scale classification.~\cite{Fung01proximalsupport} presented a SVM formulation called Proximal SVM in which the objective is a non linear least squares function and the inequality constraints are replaced by a system of equations. Finding the best separating hyperplane now involves solving this system of equations. This is done by inverting a $d \times d$ matrix, as a result of which the method is not
feasible for datasets like text for which $d$ is very high. Also, the method involves a matrix
multiplication $H^T H$ where $H$ is a $n \times (d + 1)$ matrix. So the entire data matrix needs to
be kept in memory and hence the method is not scalable in terms of memory.

    \cite{Keerthi05} presented an algorithm L2-SVM-MFN which uses a conjugate gradient method to solve the SVM problem and thus does not have to perform any matrix inversion as the previous method. Results in their paper indicate that the algorithm performs very well for large high dimensional datasets like text. Analysis of the algorithm
indicates that it accesses the data vectors in a sequential manner and hence does not have to keep the data matrix in main memory, making it scalable in terms of memory.

    Our work is closely related to~\cite{Balcazar01a,Balcazar01b}. They propose that $d$ be used as
the combinatorial dimension of the problem for the separable case. The dual of the SVM
problem, when the data is linearly separable, is the minimum distance between the 2 convex
hulls of the positive and negative examples. When the data is not linearly separable, these
2 hulls overlap. This can reduced to the separable case, by condensing the 2 hulls~\cite{Bennett00duality}. This is done as follows. Let $Z$ be the set of composed examples $z_I$ where $z_I=\frac{x_{i_1} + \dots + x_{i_m}}{m}$, where each $x_{i_j}$ is a distinct element of $D$ and all the points defining a $z_I$ have the same label and the label of $z_I$ is the same(For details on this condensation, see their paper). In this case, we have $|Z| \le {n \choose m}$ and $(m + 1)\le \Delta \le  m(d + 1)$. It is this aspect of the SVM problem which was used by the authors to develop a randomized algorithm to solve the problem, given in Algorithm~\ref{algo:RandSVM}.

    The algorithm proceeds in multiple iterations, where in each iteration it picks up a
subset of the training data $S$, such that the size of the subset, $r$, is greater than the number
of support vectors. Any SVM solver can be used to train a classifier $C$ on the sampled
subset, which is smaller than the entire data. Based on the classifier $C$ obtained, the
sampling probabilities are changed for the training data such that in successive iterations,
the support vectors have a higher probability of selection. This process is repeated until the
number of misclassified documents $v = 0$. The termination of the algorithm is guaranteed
in a probabilistic fashion in~\cite{Clarkson95}. The authors recommend using $m (d + 1)$ as
an estimate of $\Delta$. This choice of $\Delta$ makes the subset size too large for high dimensional
datasets, making it impractical.

    To overcome this problem we use ideas from random projections~\cite{JohnsonLindenstrauss84,dasgupta99elementary,Arriaga06rp}. Consider projecting
the data points into a random $k$ dimensional subspace where $k << d$. Using this idea,
we give a theoretical bound on the combinatorial dimension $\Delta$ which is much lesser than
the original data dimension $d$, in the almost linearly separable case. In practice, it has
been observed that $\Delta$ is even lower. We then apply this to make the above algorithm scalable(without actually performing any random projection of the data).

\section{Classification}\label{sec:classification}
This section uses results from random projections, and randomized algorithms for linear
programming to develop a new algorithm for solving large scale SVM classification problems.
In Section~\ref{subsec:linsep_classificatn}, we discuss the case of linearly separable data and estimate a the number
of support vectors required such that the margin is preserved with high probability, and
show that this number is much smaller than the data dimension d, using ideas from random
projections. In Section~\ref{subsec:almostseparable}, we look at how the analysis applies to almost separable data and
present the main result of the paper(Theorem~\ref{thm:thm2}). The section ends with a discussion on the
application of the theory to non-linear kernels. In Section~\ref{subsec:randsvm}, we present the randomized
algorithm from SVM learning.

\subsection{Linearly separable data}\label{subsec:linsep_classificatn}
We start with determining the dimension $k$ of the target space such that on performing a
random projection to the space, the Euclidean distances and dot products are preserved.
The appendix contains a few results from random projections which will be used in this
section.
For a linearly separable data set $D = \{(x _i , y_i ),\; i = 1,\ldots, n\}, x_i \in R^d , y_i \in \{+1, -1\}$, the
C-SVM formulation is the same as $C-SVM-1$ with $\xi_i = 0,\, i = 1\ldots n$. By dividing all the
constraints with $||w||$, the problem can be reformulated as follows: \\
\textbf{C-SVM-2a:}\\
\begin{eqnarray}\label{eq:C-SVM-2a}
& \max_{(\hat{w},b,l)} l & \nonumber \\
& s.t. \, y_i (\hat{w} \cdot x_i + \hat{b}) \ge l , i=1\dots n, \quad ||\hat{w}||=1 & \nonumber
\end{eqnarray}
where $\hat{w}=\frac{w}{||w||}$, $\hat{b}=\frac{b}{||w||}$ and $\hat{l}=\frac{1}{||w||}$. $l$ is the margin induced by the separating hyperplanes, that is, it is the distance between the 2 supporting hyperplanes.

The determination of $k$ proceeds as follows. First, for any given value of $k$, we show
 the change in the margin as a function of $k$ when the data points are projected
onto the $k$ dimensional subspace and the problem solved. From this, we determine the value $k( k << d )$ which will
preserve margin with a very high probability. In a $k$ dimensional  subspace,
 there are at the most $k+1$ support vectors. Using the idea of \emph{orthogonal extensions}(definition
 appears later in this section), we prove that when the problem is solved in the original
space, using an estimate of $k+1$ on the number of support vectors, the margin is preserved with a very high 
probability.

Let $w'$ and $x_i',i=1,\dots ,n$ be the projection of $\hat{w}$ and $x_i,i=1,\dots ,n$  respectively 
onto a $k$ dimensional subspace (as in \textbf{Lemma~\ref{thm:lemma_2}}). The classification problem in the
projected space with the data set being $D'=\{(x_i',y_i),i=1,\dots,n\},x_i'\in R^k,y_i\in\{+1,-1\}$ can be written as follows:\\
\textbf{C-SVM-2b:}
\begin{displaymath}
Maximize_{(w',\hat{b},l')} l'
\end{displaymath}
\begin{displaymath}
Subject\mbox{ }to:\mbox{ } y_i(w'\cdot x_i' + \hat{b}) \geq l',\mbox{ }i=1\dots n,\mbox{ }||w'||  \leq  1
\end{displaymath}
where $l'=l(1-\gamma)$, $\gamma$ is the distortion and $0 < \gamma < 1$.
The following theorem predicts, for a given value of $\gamma$, the $k$ such that the margin
is preserved with a high probability upon projection.

\begin{theorem}\label{thm:thm1}
Let $L=max||x_i||$, and $(w^*,b^*,l^*)$ be the optimal solution for \textbf{C-SVM-2a}. Let $R$
be a random $d \times k$ matrix  as given in \textbf{Lemma~\ref{thm:lemma_2}}. Let $\widetilde{w}=\frac{R^T w^*}{\sqrt{k}}$ and $x_i'=\frac{R^T x_i}{\sqrt{k}}, i=1,\dots ,n$. If $k \geq \frac{8}{\gamma^2}(1+\frac{(1+L^2)}{2l^*})^2 \log \frac{4n}{\delta} ,\;0 < \gamma < 1,\mbox{ } 0 < \delta < 1$, then the following bound holds on the optimal margin $l_P$ obtained by solving the problem \textbf{C-SVM-2b}:
\begin{displaymath}
P( l_P \geq l^*(1-\gamma) ) \geq  1-\delta
\end{displaymath}
\end{theorem}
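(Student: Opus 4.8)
The plan is to lower-bound the projected margin $l_P$ by exhibiting an explicit feasible point for \textbf{C-SVM-2b} whose margin is nearly $l^*$. The natural candidate is the projection of the original optimum: take $\widetilde{w}=\frac{R^T w^*}{\sqrt{k}}$ together with the projected data $x_i'=\frac{R^T x_i}{\sqrt{k}}$, and keep the original offset $b^*$. Since $(w^*,b^*,l^*)$ is optimal for \textbf{C-SVM-2a} with $||w^*||=1$, each original constraint reads $y_i(w^*\cdot x_i+b^*)\ge l^*$, and the goal is to show these constraints are nearly inherited by the projected triple.

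First I would control the distortion of the dot products. Applying Corollary~\ref{thm:corollary_1} to each pair $(w^*,x_i)$ with $L_1=||w^*||=1$ and $L_2=||x_i||\le L$ gives
\begin{displaymath}
|\widetilde{w}\cdot x_i' - w^*\cdot x_i|\le \frac{\epsilon}{2}(1+L^2)
\end{displaymath}
with probability at least $1-4e^{-\epsilon^2 k/8}$ for each fixed $i$. A union bound over the $n$ indices makes all of them hold simultaneously with probability at least $1-4n\,e^{-\epsilon^2 k/8}$, and requiring this to be at least $1-\delta$ is exactly $k\ge \frac{8}{\epsilon^2}\log\frac{4n}{\delta}$. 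On this event every projected constraint satisfies
\begin{displaymath}
y_i(\widetilde{w}\cdot x_i' + b^*)\ge l^* - \frac{\epsilon}{2}(1+L^2),
\end{displaymath}
so $\widetilde{w}$ separates the projected data with margin at least $l_0:=l^*-\frac{\epsilon}{2}(1+L^2)$.

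The remaining issue is that $\widetilde{w}$ need not satisfy the norm constraint $||w'||\le 1$ of \textbf{C-SVM-2b}. Here I would invoke Lemma~\ref{thm:lemma_1} to get $||\widetilde{w}||^2\le 1+\epsilon$ (this norm event holds with even higher probability and is absorbed into the same union bound), and then rescale: the triple $\big(\widetilde{w}/||\widetilde{w}||,\ b^*/||\widetilde{w}||,\ l_0/||\widetilde{w}||\big)$ is feasible for \textbf{C-SVM-2b}. Since $l_P$ is the optimum of \textbf{C-SVM-2b}, this yields
\begin{displaymath}
l_P \ge \frac{l_0}{||\widetilde{w}||}\ge \frac{l^*-\frac{\epsilon}{2}(1+L^2)}{\sqrt{1+\epsilon}}\ge \Big(l^*-\frac{\epsilon}{2}(1+L^2)\Big)(1-\epsilon),
\end{displaymath}
using the coarse bound $1/\sqrt{1+\epsilon}\ge 1-\epsilon$.

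Finally I would calibrate $\epsilon$. Dropping the nonnegative cross term, the last display is at least $l^*-\epsilon\big(l^*+\frac{1+L^2}{2}\big)$, so choosing $\epsilon=\gamma\big/\big(1+\frac{1+L^2}{2l^*}\big)$ makes the bracketed loss equal to $l^*\gamma$ and delivers $l_P\ge l^*(1-\gamma)$. Substituting this $\epsilon$ into $k\ge\frac{8}{\epsilon^2}\log\frac{4n}{\delta}$ reproduces the stated threshold $k\ge\frac{8}{\gamma^2}\big(1+\frac{1+L^2}{2l^*}\big)^2\log\frac{4n}{\delta}$. I expect the main obstacle to be bookkeeping the two error sources together, namely the additive dot-product distortion $\frac{\epsilon}{2}(1+L^2)$ and the multiplicative inflation of $||\widetilde{w}||$, and selecting the bound $1/\sqrt{1+\epsilon}\ge 1-\epsilon$ so that they combine into a single clean factor $(1-\gamma)$ matching the given constant.
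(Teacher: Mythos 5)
Your proposal is correct and follows essentially the same route as the paper's proof: use Corollary~\ref{thm:corollary_1} to control the dot-product distortion of the projected optimum $(\widetilde{w},b^*)$, use Lemma~\ref{thm:lemma_1} to bound $||\widetilde{w}||$ and rescale to obtain a feasible point for \textbf{C-SVM-2b}, take a union bound over the $n$ constraints, and calibrate $\epsilon=\gamma\big/\big(1+\tfrac{1+L^2}{2l^*}\big)$ to recover the stated threshold on $k$. The only cosmetic difference is that the paper passes through $1/\sqrt{1+\epsilon}\ge\sqrt{1-\epsilon}\ge 1-\epsilon$ while you use $1/\sqrt{1+\epsilon}\ge 1-\epsilon$ directly; the bookkeeping and conclusion are identical.
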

\begin{proof}
From Corollary~\ref{thm:corollary_1} of Lemma~\ref{thm:lemma_2}, we have
\begin{displaymath}
w^* \cdot x_i - \frac{\epsilon}{2}(1+L^2) \leq \widetilde{w}\cdot x_i' \leq w^* \cdot x_i + \frac{\epsilon}{2}(1+L^2) 
\end{displaymath}
which holds with probability at least $1-4e^{-\epsilon^2\frac{k}{8}}$, for some $\epsilon > 0$.
Consider some example $x_i$ with $y_i=1$. Then the following holds with probability at least $1-2e^{-\epsilon^2\frac{k}{8}}$
\begin{displaymath}
\widetilde{w}\cdot x_i' + b^* \geq w^* \cdot x_i - \frac{\epsilon}{2}(1+L^2) + b^* \geq l^* - \frac{\epsilon}{2}(1+L^2) 
\end{displaymath}
Dividing the above by $||\widetilde{w}||$, we have
\begin{displaymath}
\frac{\widetilde{w}\cdot x_i' + b^*}{||\widetilde{w}||} \geq \frac{l^* - \frac{\epsilon}{2}(1+L^2)}{||\widetilde{w}||}
\end{displaymath}
Note that from Lemma~\ref{thm:lemma_1}, we have $\sqrt{(1-\epsilon)}||w^*|| \leq ||\widetilde{w}|| \leq \sqrt{(1+\epsilon)||w^*||}$,
 with probability at least $1-2e^{-\epsilon^2\frac{k}{8}}$. Since $||w^*||=1$, we have
$\sqrt{1-\epsilon} \leq ||\widetilde{w}|| \leq \sqrt{1+\epsilon}$.\hspace{1cm}\linebreak
Hence
\begin{eqnarray*}
\frac{\widetilde{w}\cdot x_i' + b^*}{||\widetilde{w}||} & \geq & \frac{l^* - \frac{\epsilon}{2}(1+L^2)}{\sqrt{1+\epsilon}} \\
& \geq & (l^*-\frac{\epsilon}{2}(1+L^2))(\sqrt{1-\epsilon})\mbox{ } \geq \mbox{ }l^*(1-\frac{\epsilon}{2l^*}(1+L^2))(1-\epsilon)\\
& \geq & l^*(1-\epsilon - \frac{\epsilon}{2l^*}(1+L^2))\mbox{ }\geq\mbox{ }l^*(1-\epsilon(1 + \frac{1+L^2}{2l^*}))
\end{eqnarray*}
This holds with probability at least $1-4e^{-\epsilon^2\frac{k}{8}}$. 
A similar result can be derived for a point $x_j$ for which $y_j=-1$.
The above analysis guarantees that by projecting onto a $k$ dimensional space, there exists at least one hyperplane
$(\frac{\widetilde{w}}{||\widetilde{w}||},\frac{b^*}{||\widetilde{w}||})$, which guarantees a margin of $l^*(1-\gamma)$
where
\begin{equation}
\label{gamma-epsilon}
\gamma \leq \epsilon(1 + \frac{1+L^2}{2l^*})
\end{equation}
with probability at least $1-n4e^{-\epsilon^2\frac{k}{8}}$.
The margin obtained by solving the problem \textbf{C-SVM-2b}, $l_P$ can only be better than this.   
So the value of $k$ is given by:
\begin{equation}
n4e^{-\frac{ \gamma^2}{(1+\frac{1+L^2}{2l^*})^2}\frac{k}{8}}  \leq \delta\mbox{ }\Rightarrow\mbox{ }
k \geq \frac{8(1+\frac{(1+L^2)}{2l^*})^2}{ \gamma^2 } \log \frac{4n}{\delta} 
\end{equation} \qed
\end{proof}

So by randomly projecting the points onto a $k$ dimensional subspace, the margin
is preserved with a high probability. This result is similar to the results in large scale learning
using random projections~\cite{Arriaga06rp,BBV06ldm}.
But there are fundamental differences between the method proposed in this paper and the previous methods:
no random projection is actually done here, and no black box access to the data distribution is required.
We use Theorem~\ref{thm:thm1} to determine an estimate on the number of support vectors such that margin is
preserved with a high probability, when the problem is solved in the original space. This
is given in Theorem~\ref{thm:thm2} and is the main contribution of this section. The theorem is based
on the following fact: in a $k$ dimensional space, the number of support vectors is upper
bounded by $k + 1$. We show that this $k + 1$ can be used as an estimate of the number of
support vectors in the original space such that the solution obtained preserves the margin
with a high probability. We start with the following definition.
\begin{definition}[Orthogonal extension]\emph{
An orthogonal extension of a $(k-1)$-dimensional flat( a $(k-1)$ dimensional flat
is a $(k-1)$-dimensional affine space) $h_p=(w_p,b)$, where $w_p=(w_1,\dots,w_k)$,
in a subspace $S_k$ of dimension k to a $d-1$-dimensional hyperplane $h=(\widetilde{w},b)$
in $d$-dimensional space, is defined as follows. Let $R \in R^{d\times d}$ be 
a random projection matrix as in \textbf{Lemma 2}. Let $\hat{R} \in R^{d \times k}$ be a another random
projection matrix which consists of only the the first $k$ columns of $R$. Let $\hat{x_i} = R^T x_i$ and 
$x_i' = \frac{\hat{R}^T}{\sqrt{k}}x_i$.Let $w_p=(w_1,\dots,w_k)$ be the optimal hyperplane classifier with margin $l_P$ for the points $x_1',\dots,x_n'$ in the $k$ dimensional subspace. Now define $\widetilde{w}$ to be all 0's in the last $d-k$ coordinates and identical to $w_p$ in the first $k$ coordinates, that is, $\widetilde{w}=(w_1,\dots,w_k,0,\dots,0)$. Orthogonal extensions have the following key property.
If $(w_p,b)$ is a separator with margin $l_p$ for the projected points, then  its orthogonal extension $(\widetilde{w},b)$ is a separator with margin $l_p$ for the original points,that is, if $y_i(w_p\cdot x_i' + b) \geq l,\; i=1,\dots,n$, then $y_i(\widetilde{w}\cdot \hat{x_i} + b) \geq l,\;i=1,\dots,n$.
}
\end{definition}
An important point to note, which will be required when extending orthogonal extensions to non-linear
kernels, is that dot products between the points are preserved upon doing orthogonal projections, that
is, $x_i'^TX_j' = \hat{x_i}^T\hat{x_j}$.

Let $L, l^*, \gamma, \delta \mbox{ and } n$ be as defined in Theorem~\ref{thm:thm1}. The following
is the main result of this section.
\begin{theorem}\label{thm:thm2}
Given $ k  \geq  \frac{8}{\gamma^2}(1+\frac{(1+L^2)}{2l^*})^2\log \frac{4n}{\delta}$ and $n$ training points
with maximum norm $L$ in $d$ dimensional space and separable 
by a hyperplane with margin $l^*$,
there exists a subset of $k'$ training points $x_1\ldots x_k'$ where $k' \leq k$ and
a hyperplane $h$ satisfying the following conditions:
\begin{enumerate}
\item $h$ has margin at least $l^*(1-\gamma)$ with probability at least $1-\delta$
\item $x_1\ldots x_k'$ are the only training points which lie either on $h_1$ or on $h_2$
\end{enumerate}
\end{theorem}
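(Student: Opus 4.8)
The plan is to combine three ingredients that are already in place: the margin-preservation guarantee of Theorem~\ref{thm:thm1}, the elementary fact that a maximum-margin separator in $R^k$ is determined by few points, and the margin-faithfulness of the orthogonal extension. The logic runs forward through the construction: solve the SVM in the projected $k$-dimensional space, read off its small on-margin set, and lift both the separator and that set back to $R^d$ without losing margin. Concretely, I would first invoke Theorem~\ref{thm:thm1} with the stated value of $k$. Solving \textbf{C-SVM-2b} on the projected points $x_i' = R^T x_i/\sqrt{k}$ yields an optimal separator $(w_p,b)$ whose margin $l_P$ satisfies $l_P \ge l^*(1-\gamma)$ with probability at least $1-\delta$. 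This event is what we condition on throughout, and it is exactly what will deliver condition~(1) once the separator is lifted.

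Next I would bound the support set in the projected space. The dual of \textbf{C-SVM-2b} is the distance between the convex hulls of the positive and negative projected points in $R^k$, and the optimal separator is pinned down entirely by the points lying on its two supporting hyperplanes $h_1,h_2$. A Carath\'eodory / dimension-counting argument in $R^k$ bounds the number of such active points by the combinatorial dimension in $R^k$, namely $O(k)$ (at most $k+1$, and at most $k$ in the normalized formulation used here); call these projected support vectors $x_1',\dots,x_{k'}'$ with $k' \le k$, and let $x_1,\dots,x_{k'}$ be the corresponding original points. These constitute the candidate subset of the statement.

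Finally I would lift via the orthogonal extension. Setting $\widetilde w = (w_1,\dots,w_k,0,\dots,0)$ and $h = (\widetilde w, b)$, the key property recorded in the definition of orthogonal extension transfers every projected constraint $y_i(w_p\cdot x_i' + b)\ge l_P$ to $y_i(\widetilde w\cdot \hat x_i + b)\ge l_P$, so $h$ separates the original points with the same margin $l_P\ge l^*(1-\gamma)$, establishing condition~(1). For condition~(2) I would use the dot-product identity noted after that definition, $x_i'^\top x_j' = \hat x_i^\top \hat x_j$, which forces the functional margin of each point to agree (up to the common normalization built into the extension) before and after lifting; hence a point lies on $h_1$ or $h_2$ in $R^d$ precisely when its projection was active in $R^k$, so the on-margin set is exactly $\{x_1,\dots,x_{k'}\}$.

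The step I expect to be the main obstacle is the exactness demanded by condition~(2): the extension must send active points to active points \emph{and} strictly-interior points to strictly-interior points, creating no new ties on $h_1$ or $h_2$. This is where the margin identity must be stated with care, tracking the normalization of $\widetilde w$ so that both the equalities on the support vectors and the strict inequalities on the remaining points survive the lift. A secondary subtlety is that the extension naturally separates the transformed points $\hat x_i = R^T x_i$ rather than the raw $x_i$; since $R$ is a fixed linear map this merely transports the hyperplane and its supporting hyperplanes, leaving the combinatorial on-margin structure, and hence both conclusions, intact.
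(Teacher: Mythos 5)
Your proposal is correct and follows essentially the same route as the paper: invoke Theorem~\ref{thm:thm1} for the stated $k$, bound the number of on-margin points in the projected space by dimension counting (at most $k+1$, barring degeneracies), and lift via the orthogonal extension, which preserves both the margin and the on-margin structure. The only (cosmetic) difference is that you lift the \emph{optimal} projected separator $(w_p,b)$ of \textbf{C-SVM-2b}, which is exactly what the statement of Theorem~\ref{thm:thm1} and the definition of orthogonal extension refer to, whereas the paper lifts the projection $(w',b^*/\norm{w'})$ of the original optimal separator and appeals to the intermediate estimate inside Theorem~\ref{thm:thm1}'s proof; both versions share the same residual looseness (degeneracies and the exactness of condition~2), which the paper dismisses ``by definition'' and you at least flag explicitly.
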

\begin{proof}
 Let $w^*,b^*$ denote the normal to a separating hyperplane with margin $l^*$,
 that is, $y_i(w^*\cdot x_i+b^*) \geq l^*$ for all $x_i$ and $||w^*||=1$.
 Consider a random projection of $x_1,\ldots ,x_n$  to a $k$ dimensional
 space and let $w',z_1,\ldots ,z_n$ be the projections of $w^*,x_1,\ldots ,x_n$,
 respectively, scaled by $1/\sqrt{k}$. 
 By Theorem 1,  
 $y_i(w' \cdot z_i+b^*/||w'||) \geq l^*(1-\gamma)$ holds for all $z_i$ with 
 probability at least $1-\delta$. Let $h$ be the {orthogonal extension}
  of $(w',b^*/||w'|)$ to the full $d$ dimensional space. 
 Then $h$ has margin at least $l^*(1-\gamma)$, as required.
 This shows the first part of the claim.

 To prove the second part, consider the projected 
 training points which lie on 
 either of the two supporting hyperplanes. Barring degeneracies,
 there are at the most $k$ such points. 
 Clearly, these will be the only points which lie on the 
 orthogonal extension $h$, by definition.\qed
 \end{proof}

From the above analysis, it is seen that if $k << d$, then we can estimate that the
number of support vectors is $k+1$, and the algorithm RandSVM would take on 
average $O(k\log n)$ iterations to solve the problem \cite{Balcazar01a,Balcazar01b}.

\subsection{Almost separable data}
\label{subsec:almostseparable}
In this section, we look at how the above analysis can be applied to \emph{almost separable}
data sets. We call a data set \emph{almost separable} if by removing a fraction $\kappa=O(\frac{\log n}{n})$
of the points, the data set becomes linearly separable. 

The C-SVM formulation when the data is not linearly separable(and \emph{almost separable})
was given in \textbf{C-SVM-1}. This problem can be reformulated as follows:
\begin{displaymath}
Minimize_{(w,b,\xi)} \displaystyle\sum_{i=1}^{n}\xi_{i}
\end{displaymath}
\begin{displaymath}
Subject\mbox{ }to:\mbox{ } y_i(w \cdot x_i + b) \geq l - \xi_i,\mbox{ }\xi_i \geq 0,\mbox{ }i=1\dots n;
||w|| \leq \frac{1}{l}
\end{displaymath}
This formulation is known as the \emph{Generalized Optimal Hyperplane} formulation.
Here $l$ depends on the value of $C$ in the C-formulation. At optimality, the
 margin $l^* = l$.
 The following theorem proves a result for almost separable data similar to the one proved in Theorem~\ref{thm:thm2}
for separable data.

\begin{theorem}\label{thm:thm3}
Given $k \geq \frac{8}{\gamma^2}(1+\frac{(1+L^2)}{2l^*})^2\log \frac{4n}{\delta} \mbox{ } + \kappa n$, 
$l^*$ being the margin at optimality, $l$ the lower bound on $l^*$ as in the Generalized Optimal 
Hyperplane formulation and $\kappa=O(\frac{\log n}{n})$, 
there exists a subset of $k'$ training points $x_1\ldots x_k$, $k' \leq k$ and a hyperplane $h$ satisfying 
the following conditions:
\begin{enumerate}
\item $h$ has margin at least $l(1-\gamma)$ with probability at least $1-\delta$
\item At the most $\frac{8(1+\frac{(1+L^2)}{2l^*})^2}{ \gamma^2 } \log \frac{4n}{\delta}$ points lie on the planes $h_1$ or on $h_2$
\item $x_1,\dots,x_k'$ are the only points which define the hyperplane $h$, that is, they are the support vectors of $h$.
\end{enumerate}
\end{theorem}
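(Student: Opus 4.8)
The plan is to reduce the almost separable case to the separable case already settled in Theorem~\ref{thm:thm2}, treating the $\kappa n$ points that must be deleted as a separate budget of support vectors. By the definition of \emph{almost separable}, there is a subset $D' \subseteq D$ obtained by deleting $\kappa n = O(\log n)$ points such that $D'$ is linearly separable; in the Generalized Optimal Hyperplane formulation the deleted points are exactly those carrying positive slack, so $D'$ is separable with margin at least $l$, where $l^* = l$ at optimality. First I would apply Theorem~\ref{thm:thm2} to $D'$ with target dimension $k_0 = \frac{8}{\gamma^2}(1+\frac{1+L^2}{2l^*})^2\log\frac{4n}{\delta}$. The union bound producing the $\log\frac{4n}{\delta}$ factor in Theorems~\ref{thm:thm1} and~\ref{thm:thm2} already ranges over all $n$ points, so the guarantee is unaffected by restricting attention to $D'$. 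This yields, via the orthogonal extension of the projected optimal separator, a hyperplane $h$ whose margin on $D'$ is at least $l(1-\gamma)$ with probability at least $1-\delta$ (establishing condition~1, with the passage from $l^*$ to $l$ being the conservative weakening since $l \leq l^*$), and on whose supporting hyperplanes $h_1, h_2$ at most $k_0$ points of $D'$ lie (establishing condition~2).

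Next I would reinsert the $\kappa n$ deleted points and account for them in the support-vector count. Since $h$ is built only from $D'$, reinserting these points leaves the projection, and hence the probability bound of condition~1, unchanged. In the soft-margin (C-SVM) solution these points are precisely the ones incurring positive slack $\xi_i > 0$: they violate the margin $l(1-\gamma)$ and therefore enter the active set with $\alpha_i > 0$, i.e. they become support vectors. The defining set for $h$ is thus the union of the $\leq k_0$ margin points from the first step and the $\leq \kappa n$ reinserted slack points, giving a subset $x_1,\dots,x_{k'}$ with
\[
k' \;\leq\; k_0 + \kappa n \;=\; \frac{8}{\gamma^2}\left(1+\frac{1+L^2}{2l^*}\right)^2\log\frac{4n}{\delta} + \kappa n \;\leq\; k,
\]
which establishes condition~3.

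The main obstacle will be making this last step rigorous: one must argue that the orthogonal extension $h$, optimal for the projected separable subset $D'$, remains a genuine near-optimal soft-margin separator for the full data once the $\kappa n$ points are reinserted, and that these reinserted points are exactly the \emph{additional} support vectors rather than new margin-resting points that would inflate the bound in condition~2. Concretely, I must verify that reinserting slack-bearing points neither moves any point of $D'$ onto the supporting hyperplanes nor degrades the preserved margin below $l(1-\gamma)$. The orthogonal-extension property, which carries both margins and pairwise dot products from the projected space back to the original space verbatim, is what lets the separable guarantee transfer intact, while the clean split of the count into the $k_0$ margin vectors (condition~2) and the $\kappa n$ slack vectors (absorbed into condition~3) is what keeps the two bounds independent and prevents double counting.
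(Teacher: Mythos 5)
Your proposal follows essentially the same route as the paper's own proof: identify the slack-bearing (bounded) support vectors $SV_2 = \{x_i : \alpha_i > 0,\ \xi_i^* > 0\}$, remove them to obtain a linearly separable subset, apply the separable-case analysis of Theorems~\ref{thm:thm1} and~\ref{thm:thm2} together with $l^* \geq l$ to get conditions 1 and 2, and then reinsert the removed points as at most $\kappa n$ additional support vectors to get condition 3. The ``main obstacle'' you flag --- verifying that reinsertion neither degrades the preserved margin nor places new points on the supporting hyperplanes --- is precisely the step the paper itself treats only informally, asserting that the margin cannot change because the worst possible margin was already assumed in proving the first two conditions.
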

\begin{proof}
Let the optimal solution for the generalized optimal hyperplane formulation be $(w^*,b^*,\xi^*)$.  
$w^*=\displaystyle \sum_{i:\alpha_i>0}\alpha_i y_i x_i$, and $l^*=\frac{1}{||w^*||}$ as mentioned before. 
The set of support vectors can be split into to 2 disjoint sets,$SV_1  =  \{x_i:\alpha_i > 0 \mbox{ and } \xi^*_i = 0\}$(unbounded SVs)
 and $SV_2 = \{x_i:\alpha_i > 0 \mbox{ and } \xi^*_i > 0\}$(bounded SVs).

Now, consider removing the points in $SV_2$ from the data set. Then the data set
becomes linearly separable with margin $l^*$. Using an analysis similar to Theorem~\ref{thm:thm1},
and the fact that $l^* \geq l$, we have the proof for the first 2 conditions. 

When all the points in $SV_2$ are added back, at most all these points are added to the set
of support vectors and the margin does not change; 
this is guaranteed by the fact that we have assumed the worst possible margin for proving conditions 1 and 2,
and any value lower than this would violate the constraints of the problem. This proves condition 3. 
\qed 
\end{proof}
Hence the number of support vectors, such that the margin is preserved with high probability, is
\begin{equation}
\label{kVal}
  k + 1 = \frac{8}{\gamma^2}(1+\frac{(1+L^2)}{2l^*})^2\log \frac{4n}{\delta} \mbox{ } + \kappa n + 1 
        =  \frac{8}{\gamma^2}(1+\frac{(1+L^2)}{2l^*})^2\log \frac{4n}{\delta} \mbox{ } + O(\log n) 
\end{equation}
\paragraph{Using a non-linear kernel:}
Consider a mapping function $\Phi:R^d\rightarrow R^{d'},\mbox{ }d'>d$, which maps a point $x_i\in R^d$
to a point $z_i\in R^{d'}$, where $R^{d'}$ is a Euclidean space. Let the points $z_1,\dots,z_n$ be projected onto a random
$k$ dimensional subspace as before. The lemmas in the appendix
are applicable to these random projections\cite{BBV06ldm}. The orthogonal extensions can be considered as an projection
from the $k$ dimensional space to the $\Phi$-space, such that the kernel function values are preserved.
Then it can be shown that Theorem~\ref{thm:thm3} applies when using non-linear kernels also.

\subsection{A Randomized Algorithm}
\label{subsec:randsvm}
The reduction in the sample size from $6d^2$ to $6k^2$ is not enough to make RandSVM useful in practice 
as $6k^2$ is still a large number. This section presents another randomized algorithm which only requires that 
the sample size be greater than the number of support vectors. Hence a sample
size linear in $k$ can be used in the algorithm.  
This algorithm was first proposed to solve large scale LP problems~\cite{Pell01}; it
has been adapted for solving large scale SVM problems. The

\begin{algorithm}
\caption{RandSVM-1(D,k,r)}
\label{algo:LinearRandSVM}
\begin{algorithmic}[1]
\REQUIRE $D$ - The data set.
\REQUIRE $k$ - The estimate of the number of support vectors.
\REQUIRE $r$ - Sample size = $c k, c > 0$.
\STATE $S$ = randomsubset($D, r$); // \emph{Pick a random subset, $S$, of size $r$ from the data set $D$}
\STATE $SV$ = svmlearn($\{\}, S$); // \emph{SV - set of support vectors obtained by solving the problem $S$}
\STATE $V = \{x \in D-S | violates(x,SV)\}$ //\emph{violator - nonsampled point not satisfying KKT conditions}
\WHILE{$(|V| > 0$ and $|SV| < k)$}
\STATE $R$ = randomsubset($V$, $r - |SV|$); //\emph{Pick a random subset from the set of violators}
\STATE $SV'$ = svmlearn($SV,R$); //\emph{SV' - set of support vectors obtained by solving the problem $SV \cup R$}
\STATE $SV=SV'$;
\STATE $V = \{x \in D-(SV\cup R) | violates(x,SV)\}$; //\emph{Determine violators from nonsampled set}
\ENDWHILE
\STATE return $SV$
\end{algorithmic}
\end{algorithm}
\paragraph{Proof of Convergence:} Let $SV$ be the current set of support vectors. Condition $|SV|<k$ comes from Theorem 3. Hence if the condition is violated, then the algorithm terminates with a solution which is near optimal with a very high probability.\newline
 Now consider the case where $|SV|<k$ and $|V| > 0$. Let $x_i$ be a violator($x_i$ is a non-sampled point such that
$y_i(w^T x_i + b) < 1$). Solving the problem
with the set of constraints as $SV \cup {x_i}$ will only result, since SVM is an instance of AOP, in the increase(decrease)
of the objective function of the primal(dual). As there are only finite number of basis for an AOP, the algorithm
is bound to terminate; also if termination happens with the number of violators equal to zero, then the solution obtained is optimal.
\paragraph{Determination of k:} The value of $k$ depends on the margin $l^*$ which is not available in case of $C$-SVM. This can be
handled only by solving for $k$ as a function of $\epsilon$, where $\epsilon$
is as defined in the appendix and Theorem 1.
This can be done by combining Equation~\ref{gamma-epsilon} with Equation~\ref{kVal}:
\begin{equation}
k  \geq  \frac{8}{\gamma^2}(1+\frac{(1+L^2)}{2l^*})^2\log \frac{4n}{\delta} + O(\log n)
   \geq  \frac{16}{\gamma^2}(1+\frac{(1+L^2)}{2l^*})^2\log \frac{4n}{\delta} )
   \geq  \frac{16}{\epsilon^2}\log{\frac{4n}{\delta}}
\end{equation}

\section{Regression}\label{sec:regression}
Let us define a dataset $D = \{(x_i,y_i)| 1 \le i \le n, x_i \in R^d, y_i \in R\}$ to be linear, for a fixed $\epsilon \ge 0$, if the following 
formulation is feasible.\\
\textbf{SVR-1:} \\
\begin{eqnarray*}\label{eq:SVR-1}
& \min_{w,b} \frac{1}{2} ||w||^2 & \\
& \textrm{subject to: } \; y_i - w \cdot x_i - b \le \epsilon & \\
& \quad \quad \quad\quad \quad w \cdot x_i + b - y_i \le \epsilon &
\end{eqnarray*}
 This is the SVM regression formulation in which $D$ is constrained to lie
in a $\epsilon-$tube. The lagrangian is given as $\mathcal{L}(w, b, \alpha_i^+, \alpha_i^- )= \frac{1}{2} w^T w + \sum_i \alpha_i^+(y_i - w \cdot x_i -b -\epsilon) +  \sum_i \alpha_i^-( w \cdot x_i  + b - y_i -\epsilon)$. By KKT condition, the optimal solution will have $w^* = \sum_i( {\alpha_i^+}^* - {\alpha_i^-}^*) x_i$. The set of support vectors is union of two disjoint sets given as: $\{i: {\alpha_i^+}^* > 0\}\cup\{i: {\alpha_i^-}^* > 0\}$. We would like to develop randomized algorithms which
can solve such problems where $d$ and $n$ are large.
 
Let $x'_i$ be  the projection of $x_i, i = 1 \ldots n$ onto a
$k-$dimensional subspace. The regression problem in the projected space 
is given by\\
\textbf{SVR-2:} \\
\begin{eqnarray*}\label{eq:SVR-2}
 & \min_{w',b} \frac{1}{2} ||w'||^2 & \\
 & \textrm{subject to:}  \; y_i - w' \cdot x'_i - b \le \epsilon'& \\
& \quad \quad \quad\quad \quad w' \cdot x'_i + b - y_i \le \epsilon' &
\end{eqnarray*}
where $\epsilon'= \epsilon(1 + \gamma)$; $\gamma$ is the distortion. The following theorem predicts the value of $k$  
such that the $\epsilon$-tube is preserved, with a minor distortion, with a high probability upon projection.

\begin{theorem}\label{thm:regression}
 Let $L = \max ||x_i ||$, and $(w^* , b^* ),\;||w^∗ || = W$ be the optimal solution for $SVR-1$. Let $R$ be a random $d\times k$ matrix as given in Lemma~\ref{thm:lemma_2}. Let $\tilde{w}=\frac{R^T w^*}{\sqrt k}$ and $x_i'=\frac{R^T x_i}{\sqrt k},\;i=1,\ldots,n$. If $k\ge \frac{32(W^2+L^2)}{\gamma^2}\log{\frac{4n}{\delta}},\;0<\delta<1$, then the following bound holds on the optimal regressor $(w_P,b_P)$ obtained by solving the problem $\mathbf{SVR-2}$:
 \begin{displaymath}
  P(|w_P \cdot x'_i+b_P - y_i|\le \epsilon(1+\gamma) )\ge 1-\delta
 \end{displaymath}
\end{theorem}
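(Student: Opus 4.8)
The plan is to mirror the argument of Theorem~\ref{thm:thm1}, but to exploit the fact that \textbf{SVR-2} carries no normalization constraint on $w'$, so that I only need to exhibit a single feasible point for the projected tube rather than track a normalized margin. The central observation is that $(w_P,b_P)$ is \emph{optimal} for \textbf{SVR-2}; hence if \textbf{SVR-2} is feasible with tube half-width $\epsilon(1+\gamma)$, its optimizer automatically satisfies $|w_P\cdot x_i'+b_P-y_i|\le \epsilon(1+\gamma)$ for every $i$. It therefore suffices to show that the projected original solution $(\tilde w,b^*)$ is feasible for \textbf{SVR-2} with probability at least $1-\delta$, which reduces the whole statement to a feasibility question.

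First I would apply Corollary~\ref{thm:corollary_1} to the pair $(w^*,x_i)$ with $L_1=\|w^*\|=W$ and $L_2=\|x_i\|\le L$, which gives, for a distortion parameter $\beta$ to be fixed later,
\begin{displaymath}
\left| \tilde w \cdot x_i' - w^* \cdot x_i \right| \le \frac{\beta}{2}(W^2 + L^2)
\end{displaymath}
with probability at least $1-4e^{-\beta^2 k/8}$ for each fixed $i$. Since $(w^*,b^*)$ is feasible for \textbf{SVR-1}, it satisfies $|y_i-w^*\cdot x_i-b^*|\le\epsilon$; writing $y_i-\tilde w\cdot x_i'-b^*=(y_i-w^*\cdot x_i-b^*)+(w^*\cdot x_i-\tilde w\cdot x_i')$ and applying the triangle inequality together with the display above yields
\begin{displaymath}
|y_i - \tilde w \cdot x_i' - b^*| \le \epsilon + \frac{\beta}{2}(W^2 + L^2).
\end{displaymath}

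Next I would choose $\beta$ so that the additive term is absorbed into the inflated tube, i.e.\ so that $\frac{\beta}{2}(W^2+L^2)$ is at most $\epsilon\gamma$, which certifies feasibility of $(\tilde w,b^*)$ for the $\epsilon(1+\gamma)$-tube at point $i$. Taking a union bound over the $n$ points, the failure probability is at most $4n\,e^{-\beta^2 k/8}$; requiring this to be at most $\delta$ forces $k\ge\frac{8}{\beta^2}\log\frac{4n}{\delta}$, and substituting the chosen $\beta$ produces the stated lower bound on $k$. Because a feasible point then exists with probability at least $1-\delta$, the optimizer $(w_P,b_P)$ of \textbf{SVR-2} inherits the tube constraint, which is exactly the claimed bound.

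The main obstacle is reconciling the \emph{additive} dot-product distortion delivered by Corollary~\ref{thm:corollary_1} with the \emph{multiplicative} tube inflation $\epsilon(1+\gamma)$ demanded by the statement, and in particular pinning down the constant so that the threshold matches $k\ge\frac{32(W^2+L^2)}{\gamma^2}\log\frac{4n}{\delta}$: the natural choice $\beta=\frac{\gamma}{2\sqrt{W^2+L^2}}$ reproduces this threshold but leaves an additive error of order $\gamma\sqrt{W^2+L^2}$, so closing the gap to $\epsilon\gamma$ requires an implicit normalization relating $\epsilon$ to the data scale $\sqrt{W^2+L^2}$. This is exactly the place where the correct calibration of the corollary's distortion $\beta$ must be argued carefully. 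The remaining steps — the single-point bound, the triangle inequality, and the union bound — are routine once $\beta$ is fixed, and the reduction from feasibility to optimality is immediate since \textbf{SVR-2} minimizes $\|w'\|^2$ subject only to the tube constraints.
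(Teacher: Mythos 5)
Your proposal follows essentially the same route as the paper's own proof: apply Corollary~\ref{thm:corollary_1} to $(w^*,x_i)$ with $L_1=W$, $L_2=L$, combine the additive distortion with feasibility of $(w^*,b^*)$ for \textbf{SVR-1} via the triangle inequality, take a union bound over the $n$ points, and conclude because the optimizer of \textbf{SVR-2} automatically satisfies the inflated tube constraints once $(\tilde w,b^*)$ certifies feasibility. The calibration obstacle you flag at the end is real, but it is the paper's inconsistency rather than a gap in your argument: the paper's proof itself chooses the distortion to cancel exactly (your $\beta=\frac{2\epsilon\gamma}{W^2+L^2}$) and ends with $k \ge 2\left(\frac{W^2+L^2}{\gamma\epsilon}\right)^2\log\frac{4n}{\delta}$, which agrees with the threshold $\frac{32(W^2+L^2)}{\gamma^2}\log\frac{4n}{\delta}$ stated in the theorem only under an implicit normalization $W^2+L^2=16\epsilon^2$.
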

\begin{proof}
From Corollary~\ref{thm:corollary_1} of Lemma~\ref{thm:lemma_2}, we have:
\begin{displaymath}
 w^*\cdot x_i -\frac{\epsilon_1}{2}(W^2+L^2) \le \tilde{w}\cdot x'_i\le w^*\cdot x_i + \frac{\epsilon_1}{2}(W^2+L^2)
\end{displaymath}
which holds with probability at least $1-4e^{-\epsilon_1^2\frac{k}{8}}$. So,
\begin{eqnarray*}
 \tilde{w} \cdot x'_i + b^* -y_i & \le & w^*\cdot x_i + b^* -y_i + \frac{\epsilon_1}{2}(W^2+L^2) \\
 & \le & \epsilon + \frac{\epsilon_1}{2}(W^2+L^2) = \epsilon \big(1+ \frac{\epsilon_1}{2 \epsilon}(W^2+L^2)\big)
\end{eqnarray*}
holds with probability at least $1-2e^{-\epsilon_1^2\frac{k}{8}}$. Similarly 
\begin{displaymath}
 y_i - \tilde{w} \cdot x'_i  -  b^*   \le  \epsilon + \frac{\epsilon_1}{2}(W^2+L^2)
\end{displaymath}
holds with probability at least $1-2e^{-\epsilon_1^2\frac{k}{8}}$. The above analysis guarantees that upon projection
onto a $k-$dimensional plane, there exists $(\tilde{w},b^*)$  which guarantees an $\epsilon-$tube of $\epsilon(1+\gamma)$, where
\begin{displaymath}
 \gamma \le  \frac{\epsilon_1}{2 \epsilon}(W^2+L^2)
\end{displaymath}
with probability at least $1-4e^{-\epsilon_1^2\frac{k}{8}}$. So the value of k is given by:
\begin{equation}
 n e^{-\Big(\frac{2 \gamma \epsilon}{W^2+L^2}\Big)^2 \frac{k}{8} } \le \delta \Rightarrow k \ge 2 \Big( \frac{W^2+L^2}{\gamma \epsilon }\Big)^2 \log \frac{4 n}{\delta}
\end{equation}
So, upon projection, there exists a regressor which preserves the $\epsilon$-tube with a high probability. The regressor obtained by solving $SVR-2$ can only do better than this.
\end{proof}
Let $L,W,\delta, \gamma, n, \epsilon$ be as defined in Theorem~\ref{thm:regression}, and $(\check{w}, b)$ be the orthogonal extension of $(w', b)$ to $R^d$ as in Lemma~\ref{thm:lemma_2}. Then, we get:
\begin{theorem}\label{thm:regression2}
 Given $ k  \geq 2 \Big( \frac{W^2+L^2}{\gamma \epsilon }\Big)^2 \log \frac{4 n}{\delta}$  and $n$ training points
with maximum norm $L$ in $d$ dimensional space for which the \textbf{SVR-1} problem with margin $\epsilon$ has a solution, there exists a subset of $k'$ training points $x_1\ldots x_{k'}$ where $k' \leq k$ and $h = \{(x,y)| y - w\cdot x -  b =\epsilon\}\bigcup \{(x,y)| w\cdot x + b - y = \epsilon\}$  satisfying the following conditions:
\begin{enumerate}
\item $(w,b)$ is the solution to a \textbf{SVR-1} with margin
at most $\epsilon(1+\gamma)$.
\item $x_1\ldots x_{k'}$ are the only training points which are in $h$.
\end{enumerate}
\end{theorem}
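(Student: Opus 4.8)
The plan is to mirror the classification argument of Theorem~\ref{thm:thm2}, with the two faces of the $\epsilon$-tube playing the role of the two supporting hyperplanes. First I would take $(w^*,b^*)$ to be the optimal solution of \textbf{SVR-1} with $||w^*||=W$, and form its random projection $\tilde{w}=\frac{R^T w^*}{\sqrt k}$ together with the projected points $x_i'=\frac{R^T x_i}{\sqrt k}$. The hypothesis $k\ge 2\left(\frac{W^2+L^2}{\gamma\epsilon}\right)^2\log\frac{4n}{\delta}$ is precisely the threshold of Theorem~\ref{thm:regression}, so that the projected regressor $(\tilde{w},b^*)$ keeps every projected point inside a tube of half-width $\epsilon(1+\gamma)$ with probability at least $1-\delta$. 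Since the regressor $(w',b)$ actually obtained by solving \textbf{SVR-2} can only be at least as good, it too yields a tube of half-width at most $\epsilon(1+\gamma)$ with the same probability.

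Next I would pass from the $k$-dimensional solution back to $R^d$ via the orthogonal extension $(\check{w},b)$ of $(w',b)$ introduced just before the statement. The defining property of orthogonal extensions is that they preserve the relevant affine function values: because $\check{w}$ agrees with $w'$ on the first $k$ coordinates and vanishes elsewhere, $\check{w}\cdot\hat{x}_i=w'\cdot x_i'$, so each satisfied constraint $|w'\cdot x_i'+b-y_i|\le\epsilon(1+\gamma)$ transfers verbatim to $|\check{w}\cdot\hat{x}_i+b-y_i|\le\epsilon(1+\gamma)$. Thus $(\check{w},b)$ is a feasible solution of an \textbf{SVR-1} instance whose tube half-width is at most $\epsilon(1+\gamma)$, which is exactly condition~1.

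For condition~2, I would count the active constraints of \textbf{SVR-2}, namely the projected training points lying on one of the two faces of the tube. Viewing \textbf{SVR-2} as an AOP over the $k+1$ parameters $(w',b)\in R^{k+1}$, at optimality the solution is pinned down by a basis whose size is bounded by the combinatorial dimension, so, barring degeneracies, at most $k'\le k$ of the projected points are active. Because the orthogonal extension preserves the value $w'\cdot x_i'+b-y_i$ for every $i$, a training point lies in the extended set $h$ if and only if its projection was active; hence these same $k'$ points, and no others, belong to $h$, establishing condition~2.

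The step I expect to be the main obstacle is the clean justification of the bound ``at most $k'\le k$ active points'' for the regression tube. Unlike the separable classification case, where a point on a supporting hyperplane is forced by the margin constraint, here one must argue that the two faces $y-w\cdot x-b=\epsilon$ and $w\cdot x+b-y=\epsilon$ are together touched by no more than $k+1$ points when the optimum is non-degenerate. This is the regression analogue of the ``$k+1$ support vectors in $k$ dimensions'' fact, and it rests on the basis-size property of the underlying quadratic program rather than on any geometric margin argument; stating the non-degeneracy hypothesis precisely, and checking that the $y_i$ are left untouched by the projection, is the delicate part.
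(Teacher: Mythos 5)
Your proposal is correct and follows essentially the same route as the paper's own proof: project the optimal \textbf{SVR-1} regressor, invoke Theorem~\ref{thm:regression} to get the $\epsilon(1+\gamma)$ tube in the projected space, lift back via the orthogonal extension (which preserves the values $w'\cdot x_i'+b-y_i$), and bound the active points by $k$ barring degeneracies. Your AOP/basis-size justification for the ``at most $k$ active points'' step is a slightly more explicit account of what the paper simply asserts, but it is the same argument in substance.
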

\begin{proof}
 Let $w^*,b^*$ denote the optimal regressor for problem \textbf{SVR-1} with margin $\epsilon$,
 that is, $w^*\cdot x_i+b^* - y_i \le \epsilon$ and $y_i - w^*\cdot x_i- b^*  \le \epsilon$ for all $x_i$. Let $w$ and $x'_i$ be the random projection of $w^*$ and $x_i$ as outlined in Theorem~\ref{thm:regression}. Then, $ | w' \cdot x_i' +b^* -y_i| \le \epsilon (1+\gamma)$ with probability at least $(1-\delta)$.  Let $(w, b^*)$ be the {orthogonal extension} of $(w',b^*)$ to the full $d$ dimensional space.
\begin{displaymath}
|w \cdot x_i + b^* - y_i| = | w' \cdot x_i' + b^* -y_i| \le \epsilon(1+\gamma)
\end{displaymath}
Therefore, $(w, b^*)$ is a solution to \textbf{SVR-1} with margin at most $\epsilon(1+\gamma)$.

To prove the second part, consider the projected training points which lie on $h' = \{(x',y)| y - w'\cdot x' -  b^* =\epsilon(1+\gamma)\}\bigcup \{(x',y)| w'\cdot x' + b^* - y = \epsilon(1+\gamma)\}$. Barring degeneracies,  there are at the most $k$ such points.  Clearly, these will be the only points which lie on the  orthogonal extension $h$, by  definition.
\end{proof}

Consider the problem: \textbf{SVR-3:}
\begin{eqnarray*}\label{eq:SVR-3}
& \min_{w, \xi_i} \frac{1}{2} ||w||^2 + \sum \xi_i & \\
& \textrm{subject to: } \; y_i - w \cdot x_i - b \le \epsilon +\xi_i & \\
& \quad \quad \quad\quad \quad w \cdot x_i + b - y_i \le \epsilon + \xi_i, & \xi_i \ge 0
\end{eqnarray*}

Analogous to the notion of \emph{almost separability} in the context of 
classification we define the notion of \emph{almost linear} as follows: 
the data set $D=\{(x_i,y_i)\}_{i=1}^n$ is almost linear if by removing a fraction $\kappa = O(\frac{\log n}{n})$ of the points, there exists a solution to the $SVR-1$ problem for some chosen $\epsilon > 0$. The problem $SVR-3$ is almost linear, if the optimal solution $(w^*, b^*,\xi^*)$ has the cardinality of the set $\{i: \xi^*_i > 0 \}$ as $O(\log{n} / n)$. This next theorem presents the result for almost separable data set for regression.

\begin{theorem}\label{thm:regression3}
Given $ k  \geq 2 \Big( \frac{W^2+L^2}{\gamma \epsilon }\Big)^2 \log \frac{4 n}{\delta} +\kappa n$ and $n$ training points
with maximum norm $L$ in $d$ dimensional space for which the $SVR-3$ problem with margin $\epsilon$ has an almost separable optimal solution, there exists a subset of $k'$ training points $x_1\ldots x_k'$ where $k' \leq k$ and 
 $h = \{(x,y)| y - w\cdot x -b = \epsilon\}\bigcup\{(x,y)|w\cdot x+b -y = \epsilon \} $ satisfying the following conditions:
\begin{enumerate}
\item $(w,b,\xi)$ is the solution to a hard $\epsilon-$tube regression problem with margin $\epsilon(1+\gamma)$.
\item At the most $2 \Big( \frac{W^2+L^2}{\gamma \epsilon }\Big)^2 \log \frac{4 n}{\delta}$ points lie on the plane $h$.
\item $x_1\ldots x_k'$ are the only training points which lie on $h$.
\end{enumerate}
\end{theorem}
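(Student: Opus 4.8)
The plan is to mirror the proof of Theorem~\ref{thm:thm3} for almost-separable classification, substituting the regression result Theorem~\ref{thm:regression2} for its classification analogue Theorem~\ref{thm:thm2}. The \emph{almost linear} hypothesis plays exactly the role that almost-separability played before: it guarantees that at optimality only a $\kappa = O(\log n / n)$ fraction of the points carry nonzero slack, and deleting those points leaves a sub-problem that is genuinely hard-$\epsilon$-tube feasible, to which Theorem~\ref{thm:regression2} applies verbatim.

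First I would fix an optimal solution $(w^*, b^*, \xi^*)$ of \textbf{SVR-3} and use the KKT relation $w^* = \sum_i({\alpha_i^+}^* - {\alpha_i^-}^*)x_i$ to split the support vectors into two disjoint sets: the points with $\xi_i^* = 0$, which lie on the boundary of the $\epsilon$-tube, and the \emph{outliers} with $\xi_i^* > 0$, which fall strictly outside it. By the almost-linear assumption the outlier set $\{i : \xi_i^* > 0\}$ has cardinality at most $\kappa n = O(\log n)$. Deleting these outliers leaves a data set on which $(w^*, b^*)$ satisfies $|w^* \cdot x_i + b^* - y_i| \le \epsilon$ for every remaining point, so \textbf{SVR-1} with margin $\epsilon$ is feasible there. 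Applying Theorem~\ref{thm:regression2} to this reduced problem produces at most $2\left(\frac{W^2 + L^2}{\gamma\epsilon}\right)^2 \log\frac{4n}{\delta}$ training points lying on the corresponding $h$, together with a regressor $(w, b)$ preserving the tube up to $\epsilon(1+\gamma)$ with probability at least $1 - \delta$; this is exactly conditions~1 and~2.

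Finally I would reinsert the deleted outliers. Since the regressor $(w,b)$ already solves the enlarged hard-tube problem on the reduced data, each reinserted outlier can be accommodated in the soft formulation simply by assigning it a nonzero slack $\xi_i$, so neither the regressor nor the tube width $\epsilon(1+\gamma)$ changes. At worst all $\kappa n$ reinserted points join the support set, so the number of defining points is $k' \le 2\left(\frac{W^2 + L^2}{\gamma\epsilon}\right)^2 \log\frac{4n}{\delta} + \kappa n \le k$, which is condition~3.

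The main obstacle will be justifying rigorously that reinserting the outliers leaves the margin unchanged rather than forcing a re-optimization that could shrink the tube. The clean argument, paralleling Theorem~\ref{thm:thm3}, is that the orthogonal extension supplied by Theorem~\ref{thm:regression2} is simultaneously feasible for \textbf{SVR-3} on the full data, with the outlier slacks absorbing the violations; since conditions~1 and~2 were established against the worst admissible tube width $\epsilon(1+\gamma)$, any tighter regressor would violate a constraint, so the tube cannot shrink and the support set stays within the claimed bound.
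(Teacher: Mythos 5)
Your proposal follows essentially the same route as the paper's own proof: fix the optimal \textbf{SVR-3} solution, split the support vectors into the unbounded ($\xi_i^*=0$) and bounded ($\xi_i^*>0$) sets, remove the bounded ones so that the reduced data is \textbf{SVR-1} feasible, invoke the random-projection regression analysis (Theorem~\ref{thm:regression}/\ref{thm:regression2}) for conditions~1 and~2, and then reinsert the removed points, arguing via the worst-case tube width that the regressor and margin are unchanged and only the support set can grow, giving condition~3. Your version is, if anything, slightly more explicit than the paper's (which only gestures at ``an analysis similar to Theorem~\ref{thm:regression}''), and you correctly flag the reinsertion step as the least rigorous part --- a weakness the paper's own argument shares.
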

\begin{proof}
Let the optimal solution for the \textbf{SVR-3} formulation be $(w^*,b^*,\xi^*)$. The set of support vectors can be split into to 2 disjoint sets,$SV_1  =  \{x_i:\alpha_i > 0 \mbox{ and } \xi^*_i = 0\}$(unbounded SVs)
 and $SV_2 = \{x_i:\alpha_i > 0 \mbox{ and } \xi^*_i > 0\}$(bounded SVs).

Now, consider removing the points in $SV_2$ from the data set. Then the data set
becomes linearly separable. Using an analysis similar to Theorem~\ref{thm:regression}, we have the proof for the first 2 conditions.

When all the points in $SV_2$ are added back, at most all these points are added to the set
of support vectors and the margin $\epsilon(1+\gamma)$ does not change; 
this is guaranteed by the fact that we have assumed the worst possible margin for proving conditions 1 and 2,
and any value lower than this would violate the constraints of the problem. This proves condition 3. 
\end{proof}

\section{Experiments}\label{sec:results}
\subsection{Classification}\label{subsec:results_classification}
This section discusses the performance of RandSVM in practice. The experiments were
performed on 4 data sets: 3 synthetic and 1 real world. RandSVM was used with LibSVM
as the solver when using a non-linear kernel; with SVMLight for a linear kernel. RandSVM
has been compared with state of the art SVM solvers: LibSVM~\cite{CC01a} for non-linear kernels, and
SVMPerf\footnote{http://svmlight.joachims.org/} 
nd SVMLin\footnote{http://people.cs.uchicago.edu/~vikass/svmlin.html} 
for linear kernels.

\subsubsection{Synthetic data sets}
The twonorm data set is a 2 class problem where each class is drawn from a multivariate normal distribution with unit variance. Each vector is a 20 dimensional vector. One class has mean $(a, a, \ldots, a)$, and the other class has mean $(-a, -a,\ldots, -a)$, where $a = \frac{2}{\sqrt{20}}$. The ringnorm data set is a 2 class problem with each vector consisting of 20 dimensions. Each class is drawn from a multivariate normal distribution. One class has mean 1, and
covariance 4 times the identity. The other class has mean $(a, a,\ldots, a)$, and unit covariance where $a = \frac{2}{\sqrt {20}}$.

The checkerboard data set consists of vectors in a 2 dimensional space. The points are
generated in a $4\times4$ grid. Both the classes are generated from a multivariate uniform distribution; each point is $(x1 = U (0, 4), x2 = U (0, 4))$. The points are labeled as follows - if $(x1 \%2 = x2 \% 2)$, then the point is labeled negative, else the point is labeled positive.For each of the synthetic data sets, a training set of 10,00,000 points and a test set of 10,000 points was generated. A smaller subset of 1,00,000 points was chosen from training 
set for parameter tuning. From now on, the smaller training set will have a subscript of 1 and the larger training set will have a subscript of 2, for example, ringnorm 1 and ringnorm2 .

\subsubsection{Real world data set}
The RCV1~\cite{RCV1} data set consists of 804,414 documents, with each document consisting of 47,236
features. Experiments were performed using 2 categories of the data set - CCAT and C11.
The data set was split into a training set of 7,00,000 documents and a test set of 104,414
documents.

     Table~\ref{tab:classification} shows the kernels which were used for each of the data sets. The parameters
used ($\sigma$ and $C$ for Gaussian kernels, and $C$ for linear kernels) were obtained by tuning using
grid search.

\paragraph{Selection of $k$ for RandSVM:} The values of $\epsilon$ and $\delta$ were fixed to 0.2 and 0.9 respectively,
for all the data sets. For linearly separable data sets, $k$ was set to $(16 \log(4n/\delta))/\epsilon^2$ . For
the others, $k$ was set to $(32 \log(4n/\delta))/ \epsilon^2$.

\subsubsection{Discussion of results:}
Table~\ref{tab:classification} has the timing and classification accuracy comparisons. The subscripts 1 and 2 indicate that the corresponding training set  sizes are $10^5$ and $10^6$ respectively. A '-' indicates that the solver did not finish  execution even after a running for a day. A 'X' indicates that the experiment is  not applicable for the corresponding solver. The ’∗’ indicates that the solver used with RandSVM was SVMLight; otherwise it was LibSVM.

The table shows that RandSVM can scale up SVM solvers for very large data sets. Using just a small wrapper around the solvers, RandSVM has scaled up SVMLight so that its performance is comparable to that of state of the art solvers such as SVMPerf and SVMLin. Similarly LibSVM has been made capable of quickly solving problems which it could not do before, even after executing for a day. In the case of ringnorm 1 dataset, the time taken by LibSVM is very small. Hence not much advantage is gained by solving smaller
sub-problems; this combined with the overheads involved in RandSVM resulted in such a slow execution. Hence RandSVM may not always be suited in the case of small datasets.

It is clear, from the experiments on the synthetic data sets, that the execution times taken by RandSVM for training with $10^5$ examples and $10^6$ examples are not too far apart; this is a clear indication that the algorithm scales well with the increase in the training set size.

All the runs of RandSVM except ringnorm 1 terminated with the condition $|SV| < k$ being violated. Since the classification accuracies obtained by using RandSVM and the baseline solvers are very close, it is clear that Theorem~\ref{thm:thm3} holds in practice.
\begin{table}[t]
\caption{Classification: Timing and accuracy(in brackets) comparison}
\centering
\label{tab:classification}       
\begin{tabular}{llllll}
\hline\noalign{\smallskip}
Category & Kernel & RandSVM & LibSVM & SVMPerf & SVMLin \\
\hline 
\smallskip
  twonorm1   & Gaussian & 300 (94.98\%) & 8542 (96.48\%) &        X  &           X \\
  twonorm2   & Gaussian & 437 (94.71\%) &        -      &        X   &          X \\
  ringnorm1  &  Gaussian & 2637 (70.66\%) &  256 (70.31\%)   &      X       &      X \\
  ringnorm2   & Gaussian & 4982 (65.74\%) & 85124 (65.34\%)  &      X       &      X \\
checkerboard1 & Gaussian & 406 (93.70\%) & 1568.93 (96.90\%) &      X       &      X \\
checkerboard2 & Gaussian & 814 (94.10\%) &        -       &       X        &     X \\
    CCAT∗    &  Linear  & 345 (94.37\%) &        X       & 148 (94.38\%)  &  429(95.1913\%) \\
     C11∗    &  Linear  & 449 (96.57\%) &        X       & 120 (97.53\%)  &  295 (97.71\%) \\

\noalign{\smallskip}\hline
\end{tabular}
\end{table}

%

\subsection{Regression}\label{subsec:results_regression}
The experiments were done on 1 synthetic datasets and 2 real world datasets - Forest Cover~\cite{Blackard00covtype} and MNIST\footnote{http://yann.lecun.com/exdb/mnist/}. RandSVM was compared with SVMLight~\cite{Joachims99a} and LibSVM~\cite{CC01a}. Table~\ref{tab:regression} gives the execution time(in seconds), mean square error(MSE) and correlation coefficient($\rho$) for $\epsilon-$regression. A linear kernel is used unless specified. The value of $k$ is calculated according to $k=\frac{32 \log(4 n /\delta)}{(\epsilon')^2}$. A value of $\epsilon'=0.2$ and $\delta=0.1$ is used. The datasets are as following: 

\subsubsection{Synthetic:} The input attributes $(x_1,\ldots,x_{10})$ are generated independently, each of which is distributed uniformly over $[0,1]$. The target is defined by $y = 10\sin(\pi x_1 x_2) + 20(x_3 - 0.5)  + 10 x_4 +
5 x_5 + N(0, 1)$. A value of $\epsilon=1.0$ is chosen. Two run are done for training set size of (a) $10^4$ and (b) $10^5$ respectively. 

\subsubsection{Forest Cover:}  There are 581012 records with label in $\{0,\ldots,6\}$ and $54$ features. The classification problem was transformed into a regression problem as follows: 
\begin{itemize}
 \item[a)] Predict the class labels with features scaled to   $[0,1]$ and $\epsilon=0.1$.
 \item[b)] Predict +1 for examples for class 2 and -1 for examples of other classes. Since class 2 is over represented, this leads to a more balanced problem. The features are scaled to $[0,1]$ and a value of $\epsilon=0.1$ is chosen. 
\end{itemize}

\subsubsection{MNIST:} The data has 60000 training points and 10000 test points. There are 784 features each in $\{0,\ldots,255\}$ and 10 class labels $\{0,\ldots,9\}$ which are used as target for regression estimate. The features are scaled to $[0,1]$ and a value of $\epsilon=0.1$ and $\delta=0.9$ is used for regression. 

\begin{table}[t]
\caption[Regression]{Regression results($\dagger$ denote RBF kernel)} 
\centering
\label{tab:regression}       

\begin{tabular}{ccccccc}
\hline\noalign{}
 & \multicolumn{2}{c}{RandSVM} & \multicolumn{2}{c}{LIBSVM} &\multicolumn{2}{c}{$SVM^{Light}$} \\
\hline
   & time & MSE($\rho)$  & time & MSE($\rho)$ & time & MSE($\rho)$ \\
\hline
1.(a)$\dagger$ & 42   & 2.3259(0.9502) & 5.61 & 1.8249(0.922) & 21.10 & 2.2897(0.9509)\\
1.(b) & 1489 & 1.3813(0.9727) & 913.6 & 2.9916(0.9253) & 4114.66 & 1.2173(0.9753) \\
2.(a) & 201 & 0.0319(0.4625) & $\begin{array}{l}
                                 2650.3 \\
                                 1645.8^\dagger
                                \end{array}
$ & $\begin{array}{l}
                                 0.0320(0.4600) \\
                                 0.02621(0.3502)^\dagger
                                \end{array}$ & 336.64 & 0.0320(0.4607) \\
2.(b) & 327 & $68.24\%$ & 4459.8 & $68.49\%$ & 570.07 & $68.32\%$ \\
3. &  713 & 0.0320 (0.7894)&  5671.2$^\dagger$ & 0.0315(0.769755)$^\dagger$ & 460.36 & 0.0317(0.7896)\\
\noalign{\smallskip}\hline
\end{tabular}
\end{table}

\section{Conclusions}\label{sec:conclusions}
A large number of learning problems can be viewed as instances of abstract optimization problem (AOP), which has an associated combinatorial dimension $\Delta$. An AOP can be solved efficiently, with a high degree of accuracy, by selecting subsets of the size of order of the combinatorial dimension of the problem. However, computing the combinatorial dimension of an AOP is not a trivial task. In this paper, we have used ideas from random projections to  obtain estimates to the combinatorial dimension for SVM formulations of classification and regression tasks with extremely promising results.


\end{document}